\pdfoutput=1
\documentclass[11pt]{article}

\usepackage[margin=1in]{geometry}
\usepackage[T1]{fontenc}
\usepackage{lmodern}    
\usepackage{microtype}

\usepackage{amsmath,amssymb,amsthm,mathtools}

\usepackage{float}
\newfloat{algorithm}{t}{loa}
\floatname{algorithm}{Algorithm}

\usepackage{booktabs}
\usepackage{graphicx}
\usepackage{xcolor}
\usepackage{subcaption}
\usepackage{tikz}
\usetikzlibrary{arrows.meta,positioning,fit,backgrounds,shapes.geometric,calc}
\graphicspath{{figures/}}

\usepackage[numbers,sort&compress]{natbib}
\usepackage[colorlinks=true,linkcolor=black,citecolor=black,urlcolor=blue]{hyperref}

\newtheorem{proposition}{Proposition}

\definecolor{acc}{HTML}{2C6E8F}
\definecolor{warm}{HTML}{E07B39}
\definecolor{good}{HTML}{4C9A6E}

\newcommand{\cfg}[1]{\texttt{#1}}
\newcommand{\x}{\times}

\title{\bfseries Lossless but Not Free:\\[2pt]
An Empirical Anatomy of Speculative Decoding\\ on Consumer Hardware}

\author{
  Param Chordiya\\
  University of California, San Diego\\
  \texttt{pchordiya@ucsd.edu}
}
\date{\today}

\begin{document}
\maketitle

\begin{abstract}
Single-stream autoregressive decoding of large language models is bound by memory
bandwidth: each generated token requires one full forward pass through the target model,
and successive passes cannot be parallelized. Speculative decoding restructures this
computation --- a small draft model proposes $K$ tokens autoregressively, and the target
model scores all of them in one batched pass --- while a rejection-sampling rule provably
preserves the target model's output distribution. We present a from-scratch,
device-agnostic (CUDA/MPS/CPU) implementation and an empirical study across five
draft/target backend configurations on a consumer Apple-silicon laptop. Distribution
equivalence is verified at three levels, culminating in a two-sample test over roughly
$9{,}200$ real-model tokens per method ($\chi^2 = 162.5$, $\mathrm{dof} = 200$, $p = 0.976$)
and exact greedy-sequence agreement. The best configuration reaches a measured
\textbf{$1.61\x$} wall-clock speedup at $K{=}6$, on an acceptance profile declining from
$69.7\%$ at $K{=}1$ to $37.8\%$ at the optimum, while three of five configurations
\emph{decelerate} --- because the draft fails to out-speed a small target, or because the
quantized Metal backend executes ``parallel'' verification serially, an effect we isolate
and quantify. The failures are as instructive as the successes: speculative decoding pays
off only when verification is genuinely batch-parallel and the draft/target latency gap is
real.
\end{abstract}

\noindent\textbf{Keywords:} speculative decoding, speculative sampling, LLM inference,
rejection sampling, KV cache, quantization, Apple silicon, llama.cpp, Metal, MPS

\section{Introduction}

Generating text from a decoder-only transformer is a serial process. Producing token $t+1$
requires a forward pass conditioned on tokens $1..t$; producing token $t+2$ requires the
output of that pass. No amount of parallel hardware removes this dependency chain --- at
batch size one, each decoding step reads essentially all model weights to produce a single
token, so the process is bound by memory bandwidth rather than arithmetic
throughput~\citep{kwon2023efficient,yu2022orca}. A modern accelerator that can multiply
matrices at hundreds of TFLOP/s spends a single-stream decode step mostly waiting on weight
reads.

Speculative decoding~\citep{leviathan2023fast,chen2023accelerating} attacks the dependency
chain itself. The observation is that \emph{verifying} a proposed continuation is
parallelizable even though \emph{generating} one is not: if some cheap process guesses the
next $K$ tokens, the target model can score all $K$ positions in one forward pass, because
every position's inputs are known in advance. A small draft model supplies the guesses; a
carefully constructed accept/reject rule keeps only the prefix of guesses the target model
``agrees with,'' and --- critically --- resamples the first rejected position from an
adjusted distribution such that the overall output distribution is \emph{mathematically
identical} to sampling the target model alone. Acceleration therefore never trades away
output quality; the only question is whether the bookkeeping costs less than the target
passes it saves.

That question is usually answered on server-class GPUs. This paper answers it on consumer
hardware --- an Apple-silicon laptop with 18\,GB of unified memory --- where quantized
inference runtimes, Metal compute kernels, and PyTorch's MPS backend interact in ways that
can silently invalidate speculative decoding's premises. Our contributions:

\begin{itemize}
  \item \textbf{A from-scratch, device-agnostic implementation.} A five-block engine (model
  abstraction, draft--verify loop, rejection sampling, benchmark harness, CLI) that runs
  unmodified on CUDA, MPS, and CPU, mixes backends freely (Hugging Face Transformers fp16,
  llama.cpp GGUF 4/8-bit), and performs all sampling arithmetic in float32 on the CPU with
  an explicit generator for cross-device reproducibility.
  \item \textbf{A verified distribution-equivalence result.} Correctness is gated, not
  assumed, at three levels: a $50{,}000$-trial statistical test of the rejection-sampling
  rule against analytic ground truth; an end-to-end decoder test on deterministic mock
  models including bit-exact greedy agreement; and a real-model gate comparing about
  $9{,}200$ speculative tokens against as many target-only tokens (a $\chi^2$ two-sample
  test, $p = 0.976$).
  \item \textbf{An empirical $K$-sweep across five backend configurations,} measuring
  throughput, acceptance by draft position, time-to-first-token, latency decomposition, and
  peak memory. The sweep surfaces two failure modes that we isolate with targeted
  microbenchmarks: a draft that is slower per step than a small quantized target, and a
  quantized Metal backend whose verification cost grows \emph{linearly} in batch size for
  batches of $2$--$7$ because it executes matrix--vector kernels per position.
\end{itemize}

\section{Background and Related Work}

\paragraph{Autoregressive decoding is latency-bound.}
A decoder-only transformer with cached keys and values performs, per generated token, one
matrix--vector-shaped pass over its weights. At batch size one the arithmetic intensity of
this pass is far below the compute/bandwidth break-even of modern accelerators, so step
latency approaches (weights read)$/$(memory bandwidth) --- the \emph{memory-bandwidth-bound}
regime. Larger batches amortize the weight reads across many streams and move inference
toward the \emph{compute-bound} regime; this is exactly the lever that serving-side
optimizations pull. Continuous batching schedules many concurrent requests at iteration
granularity~\citep{yu2022orca}, and PagedAttention partitions KV-cache memory into paged
blocks so those batches fit~\citep{kwon2023efficient}. Both raise \emph{throughput} across
requests; neither shortens the latency of a single stream, which is the setting this paper
targets.

\paragraph{Orthogonal accelerations.}
Weight quantization --- GPTQ's second-order one-shot approach~\citep{frantar2023gptq}, AWQ's
activation-aware scaling~\citep{lin2024awq}, NF4 4-bit data types~\citep{dettmers2023qlora},
and llama.cpp's k-quant formats~\citep{gerganov2023llamacpp} --- shrinks the weights that
each step must read and is fully composable with speculative decoding: our target models are
4-bit quantized. Knowledge distillation~\citep{hinton2015distilling} transfers a large
model's behavior into a smaller one; it changes the model rather than the decoding
procedure, and (unlike speculative decoding) offers no guarantee of matching the large
model's distribution.

\paragraph{Speculative decoding.}
Blockwise parallel decoding~\citep{stern2018blockwise} first exploited the
verify-in-parallel observation, using auxiliary prediction heads and exact-match acceptance
for greedy decoding. \citet{leviathan2023fast} and \citet{chen2023accelerating}
independently generalized this to \emph{sampling} with a modified rejection rule that
preserves the target distribution exactly (Section~\ref{sec:method}), using a separate small
draft model. Follow-on work varies the draft source: Medusa~\citep{cai2024medusa} attaches
several additional decoding heads to the target model itself and verifies a \emph{tree} of
candidates under a purpose-built attention mask; self-speculative approaches such as Draft
\& Verify~\citep{zhang2023draft} skip layers of the target to draft, requiring no second
model. We implement the two-model formulation of \citet{leviathan2023fast} and
\citet{chen2023accelerating} because it cleanly separates the mathematical guarantee
(rejection sampling) from the systems question (when do the economics work), which is the
object of our study.

\section{Method}
\label{sec:method}

\subsection{Setting and notation}

Let $p(\cdot \mid s)$ denote the target model's next-token distribution given context $s$,
and $q(\cdot \mid s)$ the draft model's. Both are computed as temperature-scaled softmaxes
over a shared vocabulary $V$; the same temperature $T$ is applied to both.
Table~\ref{tab:notation} collects the symbols used throughout.

\begin{table}[t]
\centering
\caption{Notation.}
\label{tab:notation}
\small
\begin{tabular}{ll}
\toprule
Symbol & Meaning \\
\midrule
$p(\cdot\mid s),\ q(\cdot\mid s)$ & target / draft next-token distribution given context $s$ \\
$V,\ T$ & shared vocabulary; sampling temperature (applied to both models) \\
$K$ & speculation length (draft tokens proposed per round) \\
$x_i,\ q_i$ & the $i$-th drafted token and the draft distribution it was sampled from \\
$p_i$ & target distribution at position $i$, i.e.\ $p(\cdot\mid s+x_{1:i-1})$ \\
$p_{K+1}$ & bonus distribution $p(\cdot\mid s+x_{1:K})$, free from the verify pass \\
$r_i$ & acceptance draw, $r_i\sim\mathrm{Uniform}(0,1)$ \\
$a$ & index of the last accepted draft (first rejection at $a{+}1$) \\
$p'_{a+1}$ & adjusted resampling distribution at the first rejected position \\
$\alpha$ & (pooled) acceptance rate \\
\bottomrule
\end{tabular}
\end{table}

One round of speculative decoding proceeds:
\begin{enumerate}
  \item \textbf{Draft.} Sample $K$ tokens autoregressively from the draft:
  $x_i \sim q(\cdot \mid s + x_{1:i-1})$ for $i = 1..K$, retaining each full distribution
  $q_i$.
  \item \textbf{Verify.} Run the target once over the $K$ new positions (with its KV cache
  covering $s$), obtaining $K{+}1$ distributions: $p_i = p(\cdot \mid s + x_{1:i-1})$ for
  $i = 1..K$, plus the \emph{bonus} distribution $p_{K+1} = p(\cdot \mid s + x_{1:K})$.
  \item \textbf{Accept/reject.} Walk $i = 1..K$: draw $r_i \sim \mathrm{Uniform}(0,1)$ and
  accept $x_i$ iff
  \begin{equation}
    r_i < \min\!\left(1,\ \frac{p_i(x_i)}{q_i(x_i)}\right).
    \label{eq:accept}
  \end{equation}
  At the first rejection, position $a{+}1$, discard $x_{a+1}..x_K$ and emit one replacement
  token drawn from the \emph{adjusted} distribution
  \begin{equation}
    p'_{a+1}(x) = \frac{\max\!\big(0,\; p_{a+1}(x) - q_{a+1}(x)\big)}
                       {\sum_{y \in V} \max\!\big(0,\; p_{a+1}(y) - q_{a+1}(y)\big)}.
    \label{eq:adjust}
  \end{equation}
  \item \textbf{Bonus.} If all $K$ drafts are accepted, emit one extra token sampled from
  $p_{K+1}$ --- free, because that distribution was already computed by the verify pass.
\end{enumerate}
The round emits between $1$ and $K{+}1$ tokens for exactly one target pass and $K$ draft
steps; the loop repeats until an end-of-sequence token or a length budget
(Algorithm~\ref{alg:round}).

\begin{algorithm}[t]
\small
\hrule height 0.9pt\vspace{4pt}
\noindent\textbf{Input:}\ context $s$; draft $q$, target $p$; speculation length $K$; temperature $T$\\
\textbf{Output:}\ $1$ to $K{+}1$ new tokens, distributed as $p(\cdot\mid s)$\\[2pt]
\hrule height 0.4pt\vspace{4pt}
\noindent\textbf{for} $i = 1$ \textbf{to} $K$:\quad $x_i \sim q(\cdot \mid s + x_{1:i-1})$, retaining $q_i$ \hfill $\triangleright$ draft autoregressively\\
$(p_1,\dots,p_{K+1}) \leftarrow \textsc{TargetForward}(s + x_{1:K})$ \hfill $\triangleright$ one target pass, $K{+}1$ rows\\
\textbf{for} $i = 1$ \textbf{to} $K$:\\
\hspace*{1.6em}draw $r_i \sim \mathrm{Uniform}(0,1)$\\
\hspace*{1.6em}\textbf{if} $r_i < \min\!\big(1,\ p_i(x_i)/q_i(x_i)\big)$:\ \ commit $x_i$ \hfill $\triangleright$ accept\\
\hspace*{1.6em}\textbf{else}:\ \ emit $y \sim p'_i$ with $p'_i(x)\propto \max\!\big(0, p_i(x)-q_i(x)\big)$; \textbf{return} committed $\,+\, y$ \hfill $\triangleright$ reject; discard $x_{i:K}$\\
emit bonus $y \sim p_{K+1}$; \textbf{return} committed $\,+\, y$ \hfill $\triangleright$ all $K$ accepted\\[2pt]
\hrule height 0.9pt
\caption{One round of speculative decoding. Caches are truncated to the accepted length after
the walk; sampling arithmetic is CPU float32.}
\label{alg:round}
\end{algorithm}

\subsection{Why the output distribution equals the target's}
\label{sec:proof}

\begin{proposition}[\citealp{leviathan2023fast,chen2023accelerating}]
For any context, the token emitted by one round of Algorithm~\ref{alg:round} is distributed
exactly as $p(\cdot\mid s)$.
\end{proposition}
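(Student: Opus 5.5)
The plan is to prove the single-position identity first and then lift it to the whole round by induction over draft positions. The core claim is this: for fixed context $s$ with $q_1 = q(\cdot\mid s)$ and $p_1 = p(\cdot\mid s)$, the first token emitted is either the accepted draft $x_1$ or the resampled replacement $y\sim p'_1$, and its law is exactly $p_1$. I will compute the probability that the emitted token equals a fixed $x\in V$ by splitting on the two disjoint events ``draft proposes $x$ and is accepted'' and ``draft is rejected and the replacement is $x$.''

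For the first term, the probability is
\[
q_1(x)\min\!\bigl(1, p_1(x)/q_1(x)\bigr) = \min\bigl(q_1(x), p_1(x)\bigr),
\]
with the convention that $x$ with $q_1(x)=0$ is never proposed, so the term is $0=\min(0,p_1(x))$. The overall rejection probability is
\[
\beta = 1 - \sum_y \min(p_1(y),q_1(y)) = \sum_y \max\bigl(0, p_1(y)-q_1(y)\bigr),
\]
using $\sum_y p_1(y)=1$ and $p-\min(p,q)=\max(0,p-q)$. The rejection event depends only on $(x_1,r_1)$, and the replacement draw is independent given rejection. So the second term is $\beta\cdot p'_1(x) = \max(0,p_1(x)-q_1(x))$, because the normalizer in \eqref{eq:adjust} is exactly $\beta$. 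Summing gives $\min(p_1,q_1)+\max(0,p_1-q_1)=p_1(x)$. The degenerate case $\beta=0$ (i.e.\ $p_1=q_1$) needs a separate remark: rejection then has probability zero, so $p'_1$ is never used and its undefined normalizer is harmless.

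The statement speaks of ``the token emitted,'' but the round may emit several. I will therefore strengthen the claim to the full output sequence: each emitted token is distributed as $p$ given the context formed by $s$ and all previously emitted tokens. The argument is the same identity, applied conditionally. Suppose $x_1,\dots,x_{i-1}$ were accepted. Then the context is $s+x_{1:i-1}$, $x_i$ is a fresh sample from $q_i=q(\cdot\mid s+x_{1:i-1})$, and $r_i$ is a fresh uniform independent of everything earlier. Hence the conditional law of the $i$-th emitted token given the prefix is again $\min(p_i,q_i)+\max(0,p_i-q_i)=p_i$. If all $K$ drafts are accepted, the bonus is drawn directly from $p_{K+1}=p(\cdot\mid s+x_{1:K})$. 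Chaining these conditionals shows the emitted block is distributed as autoregressive sampling from $p$, truncated at a stopping length.

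I expect the main obstacle to be making that stopping argument precise. The number of emitted tokens is random and depends on the tokens themselves, so I must check that the rule ``stop after a rejection'' does not bias the law of the tokens emitted before it. My approach is to couple with a target-only sampler: the decision to stop after position $i$ is made at the same moment the $i$-th token is produced, using only randomness already consumed. Given the history, each token's conditional law is $p$ regardless of whether it turns out to be the last one. Because subsequent rounds restart from the committed context with fresh randomness, the concatenation over rounds has the law of plain target sampling. The rest is the bookkeeping identity $\min+\max(0,\cdot)$ from the single-position step.
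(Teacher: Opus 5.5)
Your proposal is correct and follows the paper's proof essentially step for step. Both arguments split $\Pr[Y=y]$ into $\min(p,q)(y)+\max(0,p-q)(y)$, identify the rejection mass with the normalizer of $p'$, then induct over positions conditioned on the committed prefix, with the bonus drawn directly from $p_{K+1}$. Your extra care with $q(x)=0$, with $\beta=0$ (a footnote in the paper), and with the stopping argument sharpens the paper's one-line induction. One correction: ``regardless of whether it turns out to be the last one'' should mean \emph{not conditioning} on the current accept flag, since given rejection the emitted token has law $p'$, not $p$.
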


\begin{proof}[Proof sketch (single position)]
Fix a context and abbreviate $p, q$ for the target and draft distributions there. The
emitted token $Y$ is either an accepted draft or an adjusted resample. For any token $y$,
\begin{equation}
  \Pr[Y = y,\ \text{accept}] = q(y)\cdot\min\!\left(1, \frac{p(y)}{q(y)}\right)
  = \min\big(p(y),\, q(y)\big).
\end{equation}
The total rejection probability is
$1 - \sum_z \min(p(z), q(z)) = \sum_z \max(0, p(z) - q(z))$, which is exactly the normalizer
of $p'$ in Eq.~\eqref{eq:adjust}, so
\begin{equation}
  \Pr[Y = y,\ \text{reject}]
  = \Big[\textstyle\sum_z \max(0, p(z)-q(z))\Big] \cdot p'(y)
  = \max\big(0,\, p(y) - q(y)\big).
\end{equation}
Summing the two cases gives
$\Pr[Y = y] = \min(p,q)(y) + \max(0, p-q)(y) = p(y)$.
Induction over positions extends this to sequences: conditioned on any committed prefix, the
next emitted token is distributed as $p(\cdot \mid \text{prefix})$ --- whether it arrives as
an accepted draft, a resample, or a bonus draw (the bonus is a direct sample from
$p_{K+1}$). The marginal process is therefore indistinguishable from autoregressive sampling
of the target alone.
\end{proof}

\paragraph{Greedy special case.}
As $T \to 0$ both distributions collapse to one-hot argmax vectors, and the general rule
degenerates to: accept iff $\arg\max p_i = x_i$, else emit $\arg\max p_i$. Our implementation
treats $T{=}0$ explicitly (no RNG is consumed), and the general-rule-on-one-hots equivalence
is unit-tested.

\paragraph{Expected yield.}
Under the idealized assumption of a position-independent acceptance rate $\alpha$, a round
emits $(1-\alpha^{K+1})/(1-\alpha)$ tokens in expectation~\citep{leviathan2023fast}. Measured
acceptance in fact \emph{decays} with draft position (Section~\ref{sec:accept}), so this
expression is an upper bound that becomes loose as $K$ grows --- one of the two reasons large
$K$ underperforms in practice.\footnote{Eq.~\eqref{eq:adjust} is well defined whenever a
rejection can occur; the implementation additionally guards the measure-zero case in which
the residual mass is exactly zero (only reachable when $p=q$ at that position) by falling
back to $p$, which is distributionally harmless there.}

\section{Implementation}
\label{sec:impl}

\begin{figure}[t]
\centering
\resizebox{\linewidth}{!}{%
\begin{tikzpicture}[
  >={Latex[length=2mm]}, font=\small, node distance=7mm and 11mm,
  model/.style={rounded corners=3pt, draw=warm!85, fill=warm!8, align=center,
                inner sep=4pt, minimum height=12mm, text width=25mm},
  block/.style={rounded corners=3pt, draw=acc!80, fill=acc!7, align=center,
                inner sep=4pt, minimum height=12mm, text width=27mm},
  term/.style={rounded corners=7pt, draw=black!55, fill=black!5, align=center,
               inner sep=4pt, text width=17mm, minimum height=10mm},
  flow/.style={->, thick, draw=black!60},
  loop/.style={->, thick, dashed, draw=black!55},
]
\node[term] (ctx) {context $s$\\ (KV caches)};
\node[model, right=of ctx] (draft) {\textbf{Draft} $q$\\[1pt] $K$ autoregr.\ steps\\ keep $q_{1:K}$};
\node[model, right=of draft] (verify) {\textbf{Verify} $p$\\[1pt] one target pass\\ rows $p_{1:K+1}$};
\node[block, right=of verify] (accept) {\textbf{Accept walk}\\[1pt]
  $r_i<\min\!\big(1,\tfrac{p_i(x_i)}{q_i(x_i)}\big)$};
\node[block, above right=3mm and 12mm of accept] (bonus)
  {\textbf{Bonus}\\ $y\sim p_{K+1}$};
\node[block, below right=3mm and 12mm of accept] (resample)
  {\textbf{Resample}\\ $y\sim p'_{a+1}$\\ $\propto\max(0,p{-}q)$};
\node[term, right=30mm of accept] (commit) {commit\\ $1$--$(K{+}1)$\\ tokens};

\draw[flow] (ctx) -- (draft);
\draw[flow] (draft) -- node[above,font=\scriptsize]{$x_{1:K}$} (verify);
\draw[flow] (verify) -- (accept);
\draw[->, thick, draw=good!60!black] (accept) to[out=25,in=180]
  node[above left=0mm and 1mm,font=\scriptsize,text=good!55!black]{all $K$ acc.} (bonus);
\draw[->, thick, draw=red!65!black] (accept) to[out=-25,in=180]
  node[below left=0mm and 1mm,font=\scriptsize,text=red!60!black]{reject @ $a{+}1$} (resample);
\draw[flow] (bonus) to[out=0,in=90] (commit);
\draw[flow] (resample) to[out=0,in=-90] (commit);
\draw[loop] (commit) to[out=-90,in=-90]
  node[below,font=\scriptsize,pos=0.5]{append committed tokens; truncate KV caches; repeat}
  (ctx);
\node[font=\scriptsize\itshape, text=warm!60!black, below=1mm of draft, align=center]
  {HF fp16 (MPS) or\\ GGUF (llama.cpp/Metal)};
\end{tikzpicture}}
\caption{Core draft--verify--accept loop and the two exit branches. The draft proposes
$K$ tokens autoregressively (keeping each distribution $q_i$); the target scores all $K$
positions in one pass, yielding $p_{1:K+1}$; the accept walk commits the longest agreeing
prefix. If every draft is accepted, a free \emph{bonus} token is drawn from $p_{K+1}$;
otherwise the first rejected position is \emph{resampled} from the adjusted distribution and
the remaining drafts are discarded. Both model roles sit behind one device-agnostic
abstraction (Block~1) that mixes Hugging Face fp16 and llama.cpp GGUF backends. Both models
run on either backend; sampling arithmetic is always CPU float32.}
\label{fig:arch}
\end{figure}

The engine is organized as five blocks coded against a frozen interface file that fixes
every shared signature, the KV-cache choreography, and the benchmark JSON schema before any
block was written --- which allowed the blocks to be developed in parallel without
integration drift. Figure~\ref{fig:arch} shows the core loop.

\paragraph{Model abstraction.}
A \cfg{SpecModel} interface exposes a stateless \cfg{forward} and a stateful incremental
API: \cfg{extend(tokens)}$\to$logits appends tokens to the model's KV cache and returns
full-vocabulary logits for the new positions; \cfg{truncate\_cache(n)} rolls the cache back
(a no-op when already shorter --- the contract that makes post-rejection cleanup uniform).
Two implementations exist. \cfg{HFModel} wraps any Hugging Face causal LM with a
\cfg{DynamicCache}, cropping it in place on rollback. \cfg{QuantizedModel} wraps llama.cpp:
the KV rollback sets \cfg{n\_tokens} and explicitly removes stale cache entries via the
current \cfg{kv\_cache\_seq\_rm} API, and --- essential for correctness --- the runtime is
configured with \cfg{logits\_all=True} so \emph{full-vocabulary} logits are available at
every drafted position, not just the final one; rejection sampling is impossible with
top-$k$-truncated outputs. Both wrappers pass a KV-consistency test battery: after any
extend/truncate sequence, logits must match a fresh evaluation of the same tokens.

\paragraph{One tokenizer.}
The draft's Hugging Face tokenizer is the single source of truth; llama.cpp receives token
ids directly and never re-tokenizes. Pair construction validates tokenizer identity across
backends (vocabulary comparison for HF--HF pairs; id$\to$text round-trips for HF--GGUF
pairs) and computes the shared sampling width as the minimum of the two logit widths, since
model families pad their output layers differently. This validation exists because
plausible-looking pairs fail it: TinyLlama~\citep{zhang2024tinyllama} uses a $32{,}000$-token
Llama-2 vocabulary and cannot legally pair with Llama-3.1's $128{,}256$-token one, which is
why our CUDA headline script pairs Llama-3.2-1B with
Llama-3.1-8B~\citep{grattafiori2024llama3} instead.

\paragraph{Numerics and reproducibility.}
All sampling mathematics --- temperature scaling, the accept test, the adjusted-distribution
construction, multinomial draws --- executes on CPU in float32 through an explicit
\cfg{torch.Generator}, regardless of where the models run. Model wrappers return CPU float32
logits at their boundary. This costs microseconds per round (Section~\ref{sec:time}) and
buys bit-reproducible runs across CUDA, MPS, and CPU, immunity to fp16 underflow in
probability space, and a well-defined division in $p/q$ (with the $q = 0$ edge cases handled
explicitly).

\paragraph{Cache choreography.}
Between rounds both models' caches hold a prefix of the committed sequence; each round
extends the draft cache with the uncached suffix, extends the target cache once with (last
committed token $+$ all $K$ drafts), takes the last $K{+}1$ logit rows --- the alignment
where row $i$ predicts draft token $i{+}1$ --- and truncates both caches back to the
accepted length after the accept walk. Getting the ``last $K{+}1$ rows'' alignment wrong is
the classic off-by-one in speculative decoders; it is pinned by mock-model tests in which
any misalignment breaks a cache-prefix invariant assertion.

\paragraph{Device handling.}
A single \cfg{get\_device()} helper selects CUDA/MPS/CPU; timers call a device-appropriate
synchronize before reading the clock. The same benchmark scripts run unmodified on a CUDA
machine; a self-contained Colab script reproduces the benchmark JSON schema for the 8B-class
pairing (Llama-3.2-1B fp16 draft $+$ Llama-3.1-8B 4-bit target via bitsandbytes
NF4~\citep{dettmers2023qlora}).

\section{Experimental Setup}
\label{sec:setup}

\paragraph{Hardware and software.}
All measurements in this paper were produced on one consumer machine: an Apple-silicon
(M3-class, arm64) laptop with 18\,GB of unified memory, macOS~26.5; Python~3.12.2,
PyTorch~2.12.1 (MPS backend), Transformers~5.13.0, llama-cpp-python~0.3.33 (Metal). No CUDA
device was available; the 8B-class configuration is provided as a runnable Colab-T4 script
that emits the identical results schema, but its numbers are deliberately absent from this
paper because they were not measured here.

\paragraph{Models.}
All local pairs draw from the Qwen2.5 instruct family~\citep{qwen2024qwen25}, which shares
one tokenizer across sizes: Qwen2.5-0.5B (draft; HF fp16 or GGUF
q8\_0)~\citep{qwen2024qwen25_05b}, and targets Qwen2.5-1.5B (GGUF q4\_k\_m or HF
fp16)~\citep{qwen2024qwen25_15b}, Qwen2.5-3B (HF fp16)~\citep{qwen2024qwen25_3b}, Qwen2.5-7B
(GGUF q4\_k\_m)~\citep{qwen2024qwen25_7b}. Five configurations cover the backend
cross-product (Table~\ref{tab:configs}).

\begin{table}[t]
\centering
\caption{The five benchmark configurations (draft $\to$ target, with backend).}
\label{tab:configs}
\small
\begin{tabular}{lll}
\toprule
Config & Draft (backend) & Target (backend) \\
\midrule
\cfg{primary}    & 0.5B fp16 (Transformers, MPS)   & 1.5B q4\_k\_m (llama.cpp, Metal) \\
\cfg{all\_gguf}  & 0.5B q8\_0 (llama.cpp, Metal)   & 1.5B q4\_k\_m (llama.cpp, Metal) \\
\cfg{stretch\_7b}& 0.5B q8\_0 (llama.cpp, Metal)   & 7B q4\_k\_m (llama.cpp, Metal) \\
\cfg{mixed}      & 0.5B q8\_0 (llama.cpp, Metal)   & 1.5B fp16 (Transformers, MPS) \\
\cfg{mixed\_3b}  & 0.5B q8\_0 (llama.cpp, Metal)   & 3B fp16 (Transformers, MPS) \\
\bottomrule
\end{tabular}
\end{table}

\paragraph{Prompts.}
A fixed, vendored set of $100$ instruction-style prompts ($48$--$788$ characters;
explanation, summarization-with-passage, reasoning, coding, how-to, comparison, and creative
categories) is versioned in the repository for offline reproducibility; no benchmark-time
dataset download occurs. Prompts are chat-templated identically for both decoding paths.

\paragraph{Protocol.}
Main benchmarks: the first $30$ prompts, $2$ used as untimed warm-up ($28$ recorded), $128$
new tokens per prompt, $T = 0.8$, $K = 4$, per-prompt seed $= \text{prompt index}$.
$K$-sweeps: $10$ prompts ($2$ warm-up), $96$ tokens, $K \in \{1, 2, 4, 6, 8, 10, 12\}$.
Equivalence gate: $100$ prompts $\times$ $100$ tokens per method at $T = 1.0$, seeds
$1234{+}i$, plus a greedy leg of $5$ prompts $\times$ up to $200$ tokens at $T = 0$. Every
run writes a JSON artifact; every number below cites its artifact.

\paragraph{Metrics.}
\emph{Throughput}: generated tokens $/$ wall-clock per prompt, reported mean $\pm$ std across
recorded prompts, with a device synchronize before each clock read. \emph{TTFT}: start of
generation to the first committed token (for speculative decoding this includes the first
full draft$+$verify round). \emph{Acceptance rate}: accepted draft tokens $/$ drafted tokens,
pooled over the run; \emph{positional acceptance} is the same statistic restricted to draft
position $i$. \emph{Latency split}: draft $/$ verify $/$ sampling $/$ other wall-time per
generated token, from per-round timers. \emph{Peak memory}: process-wide high-water RSS
(\cfg{ru\_maxrss}); as a high-water mark it is non-decreasing across runs in one process, so
per-config values are upper bounds. VRAM is absent because no CUDA device was present.

\section{Results}

\subsection{Distribution equivalence (the correctness gate)}
\label{sec:equiv}

Three levels of verification, from algebra to real models.

\paragraph{Level 1 --- the sampling rule in isolation.}
For four synthetic $(p, q)$ families over a $30$-token vocabulary (uniform-Dirichlet,
heavy-tailed, near-identical, mismatched concentration), $50{,}000$ trials of draw-from-$q$
$\to$ accept/reject $\to$ adjusted-resample were compared against the analytic target $p$.
Goodness-of-fit $p$-values were $0.652$, $0.260$, $0.225$, and $0.978$, with
$\mathrm{KL}(\hat{p} \,\|\, p) \le 3.5\times10^{-4}$ in all cases --- the emitted marginal is
statistically indistinguishable from $p$.

\paragraph{Level 2 --- the full decoder on deterministic mock models.}
With draft and target implemented as fixed stochastic matrices (vocabulary $48$), pooling
$1{,}000$ seeded generations of $25$ tokens per method gave a two-sample $\chi^2 = 67.0$
($\mathrm{dof} = 47$, $p = 0.029$, above the $0.01$ gate threshold and consistent with the
null) over $20{,}950$ tokens per method, and greedy decoding produced \emph{bit-identical}
sequences to the baseline for every $K \in \{1, 2, 4, 8\}$.

\paragraph{Level 3 --- real models.}
Using the \cfg{primary} pair, $100$ prompts $\times$ $100$ tokens at $T = 1.0$ produced
$9{,}209$ speculative and $9{,}214$ baseline tokens under matched seeds. A two-sample
$\chi^2$ test over $201$ frequency bins (binned so every expected count $\ge 5$) gives
\begin{equation}
  \chi^2 = 162.5,\qquad \mathrm{dof} = 200,\qquad p = 0.976,
\end{equation}
--- no detectable difference between the speculative output distribution and target-only
sampling (Figure~\ref{fig:overlay}). The greedy leg produced exactly identical token
sequences on $5/5$ prompts ($200, 122, 147, 200, 200$ tokens; the shorter two stopped at EOS
in both paths).

\begin{figure}[t]
\centering
\includegraphics[width=0.72\linewidth]{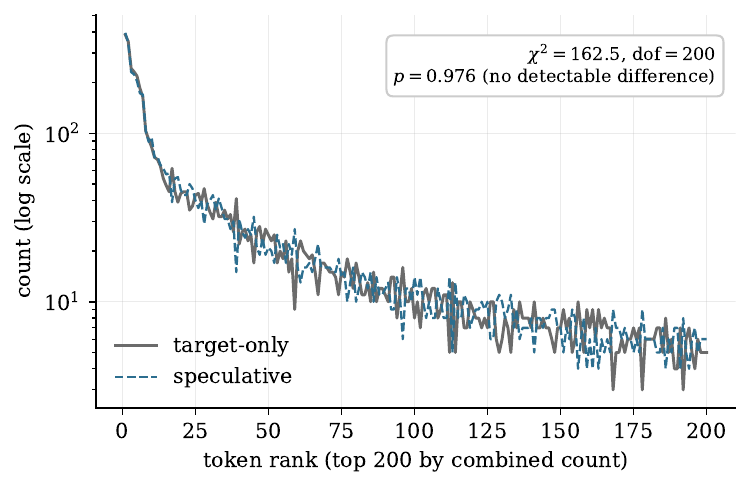}
\caption{Token-frequency overlay for the top-$200$ tokens by combined count, speculative vs.\
target-only, on a log scale, annotated with the two-sample test statistics
(\cfg{primary} pair, $100$ prompts $\times$ $100$ tokens, $T{=}1.0$). The two curves are
visually and statistically indistinguishable ($p = 0.976$).}
\label{fig:overlay}
\end{figure}

Two honest caveats. First, the reported smoothed symmetric KL ($0.471$) is dominated by
add-$0.5$ smoothing over the sparse union support (about $9{,}200$ samples spread thinly
over a large vocabulary); the calibrated instrument is the binned $\chi^2$. Second, the
greedy leg runs both paths on an fp32-CPU backend. The reason is measured, not asserted: on
the quantized Metal backend, evaluating the \emph{same context} with the final tokens in a
batch of five versus one at a time shifts logits by up to roughly $0.1$ --- enough to flip
the argmax at near-ties (observed top-1/top-2 probability gaps of
$1.8\times10^{-3}$--$5.8\times10^{-3}$), with no speculative machinery involved.
Batch-invariance of logits is a \emph{backend} property that bit-exact greedy comparison
presupposes; fp32 CPU exhibits it, the quantized Metal path does not.

\subsection{Throughput, speedup, and the \texorpdfstring{$K$}{K}-sweep}
\label{sec:sweep}

\begin{table}[t]
\centering
\caption{Main results ($K = 4$, $T = 0.8$, $128$ new tokens, mean $\pm$ std over $28$
prompts). Peak RSS is a process-wide high-water mark (an upper bound). Values from each run's
JSON aggregate; speedups from \texttt{results/summary.json}.}
\label{tab:main}
\small
\setlength{\tabcolsep}{5pt}
\begin{tabular}{lcccccccc}
\toprule
Config & Base tok/s & Spec tok/s & Speedup & Accept. & TTFT$_\text{b}$(s) & TTFT$_\text{s}$(s) & Peak RSS (GB) \\
\midrule
\cfg{mixed\_3b}  & $16.00{\pm}0.71$ & $\mathbf{25.07{\pm}2.59}$ & $\mathbf{1.57\x}$ & $43.3\%$ & $0.237$ & $0.389$ & $5.23$ \\
\cfg{mixed}      & $26.38{\pm}2.70$ & $36.91{\pm}6.53$ & $1.40\x$ & $51.0\%$ & $0.187$ & $0.268$ & $3.78$ \\
\cfg{all\_gguf}  & $73.99{\pm}6.34$ & $37.00{\pm}4.74$ & $0.50\x$ & $52.4\%$ & $0.125$ & $0.248$ & $3.22$ \\
\cfg{stretch\_7b}& $23.15{\pm}2.16$ & $11.95{\pm}1.35$ & $0.52\x$ & $40.2\%$ & $0.479$ & $0.736$ & $5.49$ \\
\cfg{primary}    & $73.46{\pm}6.25$ & $24.44{\pm}3.10$ & $0.33\x$ & $53.4\%$ & $0.122$ & $0.307$ & $2.93$ \\
\bottomrule
\end{tabular}
\end{table}

\begin{figure}[t]
\centering
\begin{subfigure}{0.49\linewidth}
  \centering\includegraphics[width=\linewidth]{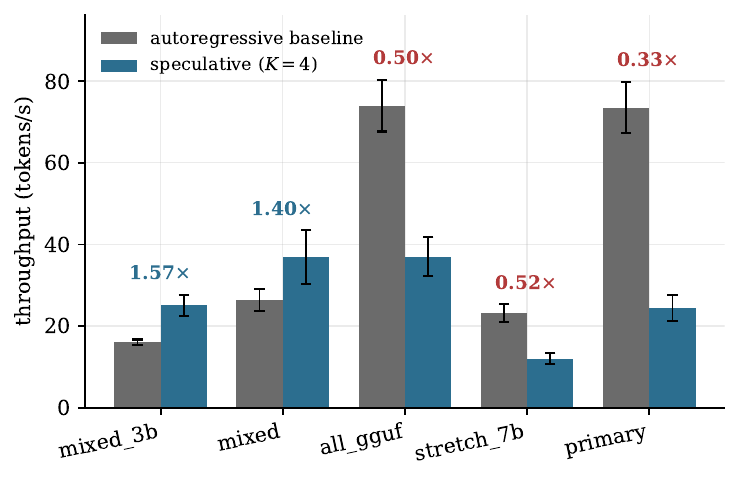}
  \caption{Baseline vs.\ speculative throughput per configuration, $K{=}4$.}
  \label{fig:speedup}
\end{subfigure}\hfill
\begin{subfigure}{0.49\linewidth}
  \centering\includegraphics[width=\linewidth]{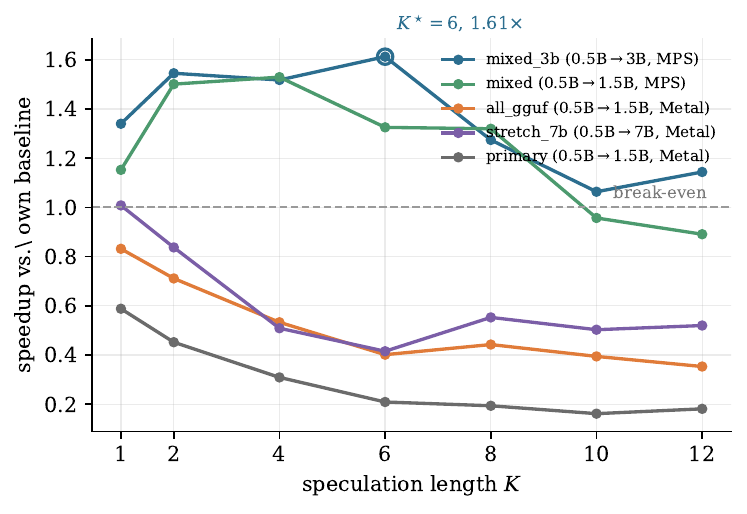}
  \caption{Speedup vs.\ $K$, each normalized by its own baseline.}
  \label{fig:vsk}
\end{subfigure}
\caption{(a) Main-run throughput ($128$ tokens, $28$ prompts); speedup factors labeled. Only
the two configurations with a Transformers-on-MPS target exceed $1.0\x$. (b) The $K$-sweep
($96$ tokens, $8$ recorded prompts): \cfg{mixed\_3b} peaks at $K{=}6$ ($1.61\x$); the three
llama.cpp-Metal-verifier configurations sit below break-even at every $K$.}
\label{fig:throughput}
\end{figure}

Table~\ref{tab:main} reports the headline numbers. \cfg{mixed\_3b} (a $0.5$B q8\_0 draft
against a $3$B fp16 MPS target) is the fastest at $1.57\x$; \cfg{mixed} follows at $1.40\x$;
the three llama.cpp-Metal-verifier configurations all lose. Table~\ref{tab:sweep} sweeps $K$
for the best configuration.

\begin{table}[t]
\centering
\caption{$K$-sweep for the best configuration (\cfg{mixed\_3b}: $0.5$B q8\_0 draft, $3$B fp16
MPS target; $96$ tokens, $8$ prompts; baseline $16.85 \pm 0.07$ tok/s).}
\label{tab:sweep}
\small
\begin{tabular}{cccc}
\toprule
$K$ & tok/s & Acceptance & Speedup \\
\midrule
$1$  & $22.57{\pm}1.09$ & $69.7\%$ & $1.34\x$ \\
$2$  & $26.03{\pm}1.60$ & $60.2\%$ & $1.55\x$ \\
$4$  & $25.57{\pm}1.93$ & $42.1\%$ & $1.52\x$ \\
$\mathbf{6}$ & $\mathbf{27.17{\pm}4.70}$ & $37.8\%$ & $\mathbf{1.61\x}$ \\
$8$  & $21.46{\pm}3.22$ & $25.6\%$ & $1.27\x$ \\
$10$ & $17.92{\pm}2.71$ & $22.2\%$ & $1.06\x$ \\
$12$ & $19.27{\pm}4.57$ & $22.8\%$ & $1.14\x$ \\
\bottomrule
\end{tabular}
\end{table}

The measured optimum is $K = 6$ at $1.61\x$, but the honest reading of Table~\ref{tab:sweep}
is a \emph{plateau}: $K \in \{2, 4, 6\}$ all deliver $1.52$--$1.61\x$ and their standard
deviations overlap ($K{=}6$'s std of $4.70$ tok/s spans the $K{=}2$ mean); beyond the
plateau, throughput decays as acceptance collapses. The companion configuration \cfg{mixed}
peaks at $K = 4$ ($43.02$ tok/s, $1.53\x$) with the same plateau shape. The three
configurations below $1.0\x$ in Figure~\ref{fig:vsk} decelerate at nearly \emph{every} $K$
(\cfg{stretch\_7b} only ties its baseline, $1.01\x$, at $K{=}1$);
Sections~\ref{sec:time}--\ref{sec:parallel} show precisely why.

\subsection{Acceptance by draft position}
\label{sec:accept}

\begin{figure}[t]
\centering
\includegraphics[width=0.66\linewidth]{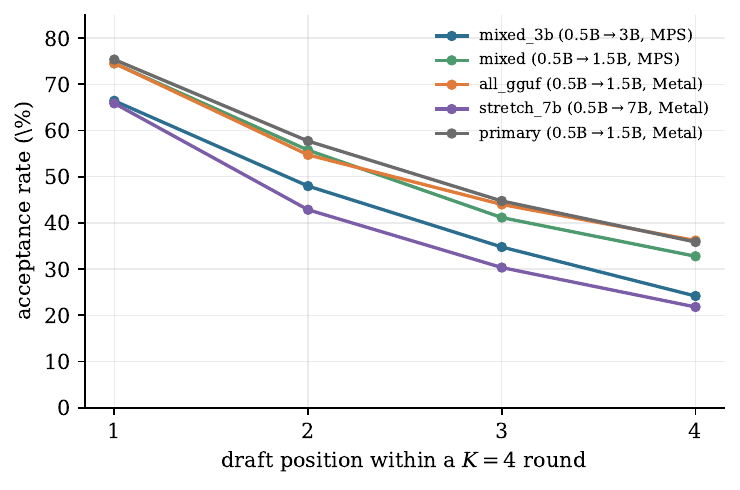}
\caption{Acceptance rate by draft position within $K{=}4$ rounds, all five main-run
configurations. Position $1$ is accepted far more often than position $4$ --- the signature
of compounding draft/target divergence. Configurations with a larger draft/target quality
gap (\cfg{stretch\_7b}, \cfg{mixed\_3b}) decay fastest.}
\label{fig:accept}
\end{figure}

For the \cfg{primary} main run, positional acceptance falls monotonically: $75.4\%$,
$57.7\%$, $44.7\%$, $35.9\%$ for positions $1$--$4$; for \cfg{mixed\_3b} (a larger
draft/target quality gap): $66.4\%$, $48.0\%$, $34.8\%$, $24.2\%$ (Figure~\ref{fig:accept}).
This shape is the mechanism behind the $K$-sweep: each additional draft position is
conditioned on a longer \emph{unverified} prefix, so its acceptance probability is roughly
the product of survival through all earlier positions times a decreasing marginal agreement.
Sweep-level pooled acceptance tells the same story from another angle, declining from
$69.7\%$ at $K{=}1$ to $22.8\%$ at $K{=}12$ for \cfg{mixed\_3b}. The practical consequence:
the marginal value of drafting token $K{+}1$ shrinks geometrically, while its cost (one more
draft step, one more verify position) is constant or growing --- producing an interior
optimum in $K$.

\subsection{Where the time goes}
\label{sec:time}

\begin{figure}[t]
\centering
\includegraphics[width=0.62\linewidth]{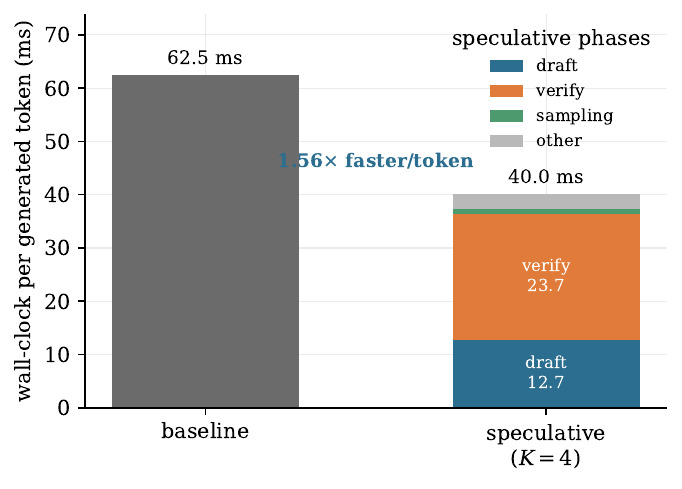}
\caption{Per-token latency decomposition for the best configuration (\cfg{mixed\_3b},
$K{=}4$) against its baseline. Sampling (the correctness-critical CPU-float32 arithmetic) is
$0.88$\,ms/token --- $2.2\%$ of wall time. The draft is cheap enough ($12.7$\,ms total per
token) that a $43\%$ acceptance rate still nets a large win.}
\label{fig:latency}
\end{figure}

For \cfg{mixed\_3b} at $K = 4$: $40.0$\,ms per generated token, split into draft $12.7$\,ms,
verify $23.7$\,ms, sampling $0.88$\,ms, other $2.8$\,ms --- against a $62.5$\,ms/token
baseline ($1000/16.0$; Figure~\ref{fig:latency}). Two facts matter. First, \emph{sampling
overhead is negligible} ($2.2\%$ of wall time), vindicating the CPU-float32 design:
correctness-critical math need not live on the accelerator. Second, the engine spends $32\%$
of its time drafting --- at $8.5$\,ms per draft step against the $3$B target's roughly
$64$\,ms verify round, the draft is cheap enough that even a $43\%$ acceptance rate at
$K{=}4$ nets a large win.

The same decomposition explains the \cfg{primary} failure: its Transformers-fp16-on-MPS
draft costs $18.0$\,ms \emph{per draft step} --- more than the llama.cpp $1.5$B target's
entire $13.6$\,ms baseline step. A draft slower than its target cannot produce speedup at any
$K$ or acceptance rate; framework per-step overhead (Python dispatch, MPS kernel launch)
dominates model size at this scale. Swapping the same $0.5$B draft to llama.cpp q8\_0
($8.4$\,ms/step, \cfg{all\_gguf}) fixes the draft but still loses --- for the reason below.

\subsection{When ``parallel'' verification is not parallel}
\label{sec:parallel}

Speculative decoding's arithmetic rests on one systems assumption: scoring $K{+}1$ known
positions costs roughly one decode step, not $K{+}1$. On the quantized Metal backend, that
assumption is measurably false for the relevant batch sizes. Verify cost per round on the
$7$B q4\_k\_m target grows nearly linearly --- $51.0$\,ms for a $2$-token pass, $74.7$\,ms
for $3$, $164.9$\,ms for $5$, $218.5$\,ms for $7$ --- then \emph{drops} to a flat
$141$--$142$\,ms at batches of $9$--$13$, the signature of a kernel-path switch: below the
threshold, llama.cpp's Metal path executes quantized matrix--\emph{vector} products per
position, re-reading the weights each time; above it, a matrix--matrix kernel reads the
weights once. By contrast, the Transformers-MPS backend extends its cache with $5$ tokens in
$42.3$\,ms versus $35.2$\,ms for one token --- a $1.20\x$ ratio, i.e., genuinely
batch-parallel.

This single backend property cleanly partitions Figure~\ref{fig:vsk}: the two configurations
whose \emph{verifier} runs on torch-MPS GEMM sit above $1.0\x$; the three whose verifier is
llama.cpp-Metal at batch sizes $2$--$7$ sit below it, regardless of how favorable their draft
economics are. \cfg{stretch\_7b} is the sharpest illustration: a textbook-asymmetric pair
($8.5$\,ms draft steps against a $43$\,ms/token target) that still \emph{halves} throughput
at $K{=}4$ because each $5$-token verify pass costs nearly four single-token steps.

\section{Discussion and Limitations}

\paragraph{Why the optimal $K$ sits where it does.}
Three forces set the optimum. (i)~The geometric decay of positional acceptance
(Section~\ref{sec:accept}) means no single constant $\alpha$ reproduces a round's yield in
the idealized $(1-\alpha^{K+1})/(1-\alpha)$: measured yield for \cfg{mixed\_3b} at $K{=}4$ is
$2917/1084 \approx 2.7$ tokens per round, above the $\approx 1.7$ that the pooled rate
$\alpha = 0.43$ gives in that formula, because the pooled average understates the high
early-position acceptance ($66.4\%$ at position~$1$) that dominates a round while later
positions ($24.2\%$ at position~$4$) add little. (ii)~Draft cost grows linearly in $K$
($8.5$\,ms per step). (iii)~Verify cost grows with batch size --- mildly on MPS, brutally on
quantized Metal. On the MPS-verifier configurations these forces balance at $K \in [2, 6]$;
on the Metal-verifier configurations force~(iii) is so strong that the optimum collapses to
$K = 1$ (or, equivalently, to not speculating at all). A rough reading of
Table~\ref{tab:sweep}: on the MPS-verifier configurations the optimum sits where pooled
acceptance has fallen to the high-$30\%$s ($37.8\%$ at $K{=}6$), and throughput collapses
once acceptance drops into the mid-$20\%$s ($K \ge 8$).

\paragraph{Where the overhead lives.}
Across all five configurations, rejection-sampling arithmetic is under $1$\,ms per generated
token --- the mathematically delicate part of speculative decoding is computationally free.
The engineering-relevant costs are the two systems terms: per-step framework overhead in the
draft loop, and the batch-scaling behavior of the verifier. Both are properties of the
\emph{runtime}, not the algorithm, and both flipped configurations between speedup and
slowdown in our measurements. We suggest that implementations report a
draft-step/target-step latency ratio and a verify-batch scaling curve alongside acceptance
rates; without them, acceptance is uninterpretable as a predictor of speedup.

\paragraph{Limitations.}
(1)~All results are single-stream, batch-size-one, on one consumer machine; server-side
batching changes the economics entirely, and our wall-clock constants are specific to
Apple-silicon Metal/MPS kernels as of the pinned software versions. (2)~Acceptance rates are
properties of the specific draft/target pair (same family, instruction-tuned, $T = 0.8$);
different families, temperatures, or domains will shift every curve. (3)~The 8B-class
pairing is provided as a Colab script but was not executed for this paper; no numbers are
claimed for it. (4)~The sampled-leg equivalence test has the statistical resolution of about
$9{,}200$ tokens per method --- it bounds, rather than eliminates, distributional deviation;
the exact result is established analytically (Section~\ref{sec:proof}) and bit-exactly in the
greedy limit. (5)~The greedy real-model leg required an fp32-CPU backend because
quantized-Metal logits are not batch-invariant; we report this as a measured backend
property, but it means bit-exact greedy equivalence on the quantized backend itself is
unattainable \emph{by any implementation} whose baseline and verifier use different batch
shapes. (6)~Peak-memory figures are process-wide high-water marks and thus upper bounds per
configuration. (7)~Sweep points use $8$ recorded prompts; adjacent-$K$ differences within one
standard deviation (e.g., the $K{=}2$--$6$ plateau) should be read as ties.

\section{Conclusion and Future Work}

We built a speculative decoding engine from scratch, showed --- statistically at three
levels, and bit-exactly in the greedy limit --- that it preserves the target model's output
distribution, and measured when it actually pays on consumer hardware. The headline is a
$1.61\x$ single-stream speedup at $K{=}6$ (plateau $1.52$--$1.61\x$ across $K{=}2$--$6$) with
a $0.5$B-draft/$3$B-target pair; the anti-headline is that three plausible configurations
decelerate, for two measured reasons that have nothing to do with the algorithm: a draft
whose per-step framework overhead exceeds a small target's entire decode step, and a
quantized backend that verifies serially below batch size $8$. The guarantee is free; the
speedup is a systems property.

Future work follows directly. \emph{Continuous batching}~\citep{yu2022orca} interacts with
speculation non-trivially --- accepted-length variance de-synchronizes streams within a
batch. \emph{Tree verification} (Medusa-style~\citep{cai2024medusa}) amortizes one verify
pass over multiple candidate continuations, raising expected yield at fixed verify cost ---
particularly attractive where verify passes are expensive. \emph{Adaptive $K$} could exploit
the strong per-round acceptance signal we observe (positional decay curves in
Section~\ref{sec:accept}) to shrink $K$ when recent acceptance drops, harvesting the plateau
without paying the tail. \emph{Self-speculation}~\citep{zhang2023draft} removes the second
model entirely, which matters on memory-constrained devices like the $18$\,GB machine used
here. Finally, the Colab script ships with the repository precisely so that the 8B-class CUDA
measurement --- where batched verification is known to be parallel-efficient --- can be added
to this paper's tables by anyone with a free T4.

\section*{Originality Note}

All prose in this paper is original writing; no sentence or distinctive phrasing is
reproduced from the cited works. All equations are typeset fresh, and all figures are
generated by this project's own code from this project's own measurements. Every empirical
value is traceable to a specific artifact under \texttt{results/} in the accompanying
repository, and the figure-generating scripts are included so each plotted value can be
traced to source data. Text-to-text generative AI (Anthropic's Claude, used via Claude Code)
assisted in drafting prose and \LaTeX{}; all bibliographic entries were verified against
their source pages, and the authorship of and responsibility for the method, code,
experiments, and claims rest with the author.

\section*{Reproducibility}

The engine, benchmark harness, equivalence gate, vendored prompt set, figure scripts, and
all result JSONs are in the accompanying repository. The frozen interface file fixes the
model API, cache choreography, and result schema; the benchmark and gate scripts regenerate
every artifact cited here, and \texttt{paper/figures/scripts/make\_figures.py} regenerates
every figure from those artifacts.

\bibliographystyle{plainnat}
\bibliography{references}

\end{document}